\newtheorem{proposition}{Proposition}
\newtheorem{definition}{Definition}
\newtheorem{lemma}{Lemma}
\newcommand{\eat}[1]{}
\newcommand{\x}{x}
\newcommand{\alg}{{\cal A}}
\newcommand{\db}{X}
\newcommand{\nbrs}{\textrm{nbrs}}
\renewcommand{\top}{T}
\newcommand{\R}{\mathbb{R}}
\newcommand{\I}{\mathbb{I}}
\newcommand{\1}{\mathbf{1}}
\newcommand{\expp}[1]{\exp\left(#1\right)}
\newcommand{\E}{\mathbb{E}}
\DeclareMathOperator{\Var}{Var}
\newcommand{\del}{\mathbb{\partial}}
\newcommand{\N}{\mathcal{N}}
\DeclareMathOperator{\Lap}{Lap}
\DeclareMathOperator{\Exp}{Exp}
\DeclareMathOperator{\Binom}{Binom}
\DeclareMathOperator{\diag}{diag}
\newcommand{\A}{\mathcal{A}}
\renewcommand{\P}{\mathcal{P}}
\title{Differentially Private Bayesian Inference for Exponential Families}
\author{
  Garrett Bernstein \\
  College of Information and Computer Sciences \\
  University of Massachusetts Amherst\\
  Amherst, MA 01002 \\
  \texttt{gbernstein@cs.umass.edu} \\
  \And
  Daniel Sheldon \\
  College of Information and Computer Sciences \\
  University of Massachusetts Amherst\\
  Amherst, MA 01002 \\
  \texttt{sheldon@cs.umass.edu}
}
\begin{document}

\maketitle

\begin{abstract}
  The study of private inference has been sparked by growing concern regarding the analysis of data when it stems from sensitive sources.
%
%
We present the first method for private Bayesian inference in exponential families that properly accounts for noise introduced by the privacy mechanism.
It is efficient because it works only with sufficient statistics and not individual data.
%
%
%
%
Unlike other methods,
it gives properly calibrated posterior beliefs in the non-asymptotic data regime.
\end{abstract}


\section{Introduction} 
\label{sec:introduction}

Differential privacy is the dominant standard for privacy~\citep{Dwork:2006aa}. A randomized algorithm that satisfies differential privacy offers protection to individuals by guaranteeing that its output is insensitive to changes caused by the data of any single individual entering or leaving the data set. An algorithm can be made differentially private by applying one of several general-purpose mechanisms to randomize the computation in an appropriate way, for example, by adding noise calibrated to the \emph{sensitivity} of the quantity being computed, where sensitivity captures how much the quantity depends on any individual's data~\cite{Dwork:2006aa}. Due to the obvious importance of protecting individual privacy while drawing population level inferences from data, differentially private algorithms have been developed for a broad range of machine learning tasks~\cite{Chaudhuri:2009aa,Rubinstein:2009aa,kasiviswanathan2011can,Abadi:2016aa,Chaudhuri:2011aa,kifer2012private,Jain:2013aa,bassily2014private}.

There is a growing interest in private methods for Bayesian inference~\cite{Dimitrakakis:2014aa,Wang:2015aa,Foulds:2016aa,ZZhang:2016aa,geumlek2017renyi}. In Bayesian inference, a modeler selects a prior distribution $p(\theta)$ over some parameter, observes data $x$ that depends probabilistically on $\theta$ through a model $p(x \mid \theta)$, and then reasons about $\theta$ through the posterior distribution $p(\theta \mid x)$, which quantifies updated beliefs and uncertainty about $\theta$ after observing $x$. Bayesian inference is a core machine learning task and there is an obvious need to be able to conduct it in a way that protects privacy when $x$ is sensitive. Additionally, recent work has identified surprising connections between sampling from posterior distributions and differential privacy---for example, a single perfect sample from $p(\theta \mid x)$ satisfies differential privacy for some setting of the privacy parameter~\cite{Dimitrakakis:2014aa,Wang:2015aa,Foulds:2016aa,ZZhang:2016aa}.

An ``obvious'' way to conduct private Bayesian inference is to privatize the computation of the posterior, that is, to design a differentially private algorithm $\A$ that outputs $y = \A(x)$ with the goal that $y \approx p(\theta \mid x)$ is a privatized representation of the posterior.
However, using $y$ directly as ``the posterior'' will not correctly quantify beliefs, because the Bayesian modeler never observes $x$, she observes $y$; her posterior beliefs are now quantified by $p(\theta \mid y)$.

This paper will take a different approach to private Bayesian inference by designing a pair of algorithms: The release mechanism $\A$ computes a private statistic $y = \A(x)$ of the input data; the inference algorithm $\P$ computes $p(\theta \mid y)$. These algorithms should satisfy the following criteria:
\begin{itemize}[leftmargin=*,topsep=0pt,itemsep=2pt,partopsep=1pt,parsep=1pt]
\item \textbf{Privacy}. The release mechanism $\A$ is differentially private. By the post-processing property of differential privacy~\cite{Dwork14Algorithmic}, all further computations are also private. 
\item \textbf{Calibration}. The inference algorithm $\P$ can efficiently compute or approximate the correct posterior, $p(\theta \mid y)$ (see Section \ref{sec:experiments} for our process to measure calibration).
\item \textbf{Utility}. Informally, the statistic $y$ should capture ``as much information as possible'' about $x$ so that $p(\theta \mid y)$ is ``close'' to $p(\theta \mid x)$.
\end{itemize}
Importantly, the release mechanism $\A$ is public, so the distribution $p(y \mid x)$ is known. Williams and McSherry first suggested conducting inference on the output of a differentially private algorithm and showed how to do this for the factored exponential mechanism~\citep{Williams:2010aa}; see also~\cite{karwa2014differentially,karwa2016inference,bernstein2017differentially,schein2018}.

Our work focuses specifically on Bayesian inference when the private data $X=x_{1:n}$ is an iid sample of (publicly known) size $n$ from an exponential family model $p(x_i \mid \theta)$. Exponential families include many of the most familiar parametric probability models. We will adopt a straightforward release mechanism where the Laplace mechanism~\citep{Dwork:2006aa} is used to release noisy sufficient statistics $y$~\citep{Foulds:2016aa,bernstein2017differentially}, which are a finite-dimensional quantity that capture all the information about $\theta$~\cite{Fisher:1922aa}.

The technical challenge is then to develop an efficient general-purpose inference algorithm $\P$.
One challenge is computational efficiency. The exact posterior $p(\theta \mid y) \propto \int p(\theta)p(x_{1:n} \mid \theta) p(y | x_{1:n}) dx_{1:n}$ integrates over all possible data sets~\cite{Williams:2010aa}, which is intractable to do directly for large~$n$. We
integrate instead over the sufficient statistics~$s$, which have fixed dimension and completely characterize the posterior; furthermore, since they are a sum over individuals, $p(s \mid \theta)$ is asymptotically normal. We develop an efficient Gibbs sampler that uses a normal approximation for $s$ together with variable augmentation to model the Laplace noise in a way that yields simple updates~\cite{Park:2008aa}.

A second challenge is that the sufficient statistics
may be unbounded, which makes their release incompatible with the Laplace mechanism. We address this by imposing truncation bounds and only computing statistics from data that fall within the bounds. We show how to use automatic differentiation and a ``random sum'' central limit theorem to compute the parameters of the normal approximation $p(s \mid \theta)$ for a \emph{truncated} exponential family when the number of individuals that fall within the truncation bounds is unknown. 

Our overall contribution is the pairing of an existing simple release mechanism $\A$ with a novel, efficient, and general-purpose Gibbs sampler $\P$ that meets the criteria outlined above for private Bayesian inference in any univariate exponential family or multivariate exponential family with bounded sufficient statistics.\footnote{There are remaining technical challenges for multivariate models with unbounded sufficient statistics that we leave for future work.} We show empirically that when compared with competing methods, ours is the only one that provides properly calibrated beliefs about $\theta$ in the non-asymptotic regime, and that it provides good utility compared with other private Bayesian inference approaches.


\section{Differential Privacy} 
\label{sec:background}

Differential privacy requires that an individual's data has a limited effect on the algorithm's behavior. In our setting, a data set $X = x_{1:n} := (x_1, \ldots, x_n)$ consists of records from $n$ individuals, where $x_i \in \R^d$ is the data of the $i$th individual. We will assume $n$ is known. Differential privacy reasons about the hypothesis that one individual chooses not to remove their data from the data set, and their record is replaced by another one.\footnote{This variant assumes $n$ remains fixed, which is sometimes called \emph{bounded} differential privacy~\cite{kifer2011no}.} Let $\nbrs(X)$ denote the set of data sets that differ from $X$ by exactly one record---i.e., if $X' \in \nbrs(\db)$, then $\db' = (\x_{1:i}, x'_i, x_{i+1:n})$ for some $i$.

\begin{definition}[Differential Privacy;~\citet{Dwork:2006aa}] \label{def:dp}
A randomized algorithm $\alg$ satisfies $\epsilon$-differential privacy if for any input $\db$, any $\db' \in \nbrs(\db)$ and any subset of outputs $O \subseteq \textrm{Range}(\alg)$, 
$$ \Pr[\alg(\db) \in O] \leq \exp(\epsilon) \Pr[\alg(\db') \in O].$$
\end{definition}

We achieve differential privacy by injecting noise into statistics that are computed on the data.   Let $f$ be any function that maps datasets to $\R^d$.  The amount of noise depends on the {\em sensitivity} of $f$.

\begin{definition}[Sensitivity] \label{def:sensitivity}
The {\em sensitivity} of a function $f$ is 
$\Delta_{f} = \max_{\db, \db' \in \nbrs(\db)} \| f(\db) - f(\db') \|_1.$
\end{definition}
We drop the subscript $f$ when it is clear from context. Our approach achieves differential privacy through the application of the Laplace mechanism.
\begin{definition}[Laplace Mechanism;~\citet{Dwork:2006aa}] \label{def:laplace}
  Given a function $f$ that maps data sets to $\R^m$, the Laplace mechanism outputs the random variable $\mathcal{L}(\db) \sim \Lap\big(f(\db), \Delta_f/\epsilon\big)$ from the Laplace distribution, which has density $\Lap(z; u, b) = (2b)^{-m}\expp{-\|z - u\|_1/b}$. This corresponds to adding zero-mean independent noise $u_i \sim \Lap(0, \Delta_f/\epsilon)$ to each component of $f(X)$.
\end{definition}
A final important property of differential privacy is \emph{post-processing}~\cite{Dwork14Algorithmic}; if an algorithm $\alg$ is $\epsilon$-differentially private, then any algorithm that takes as input only the output of $\alg$, and does not use the original data set $X$, is also $\epsilon$-differentially private. 


\newcommand{\m}{\mathbf{m}}
\newcommand{\V}{\mathbf{V}}

\section{Private Bayesian Inference in Exponential Families} 
\label{sec:approach}

We consider the canonical setting of Bayesian inference in an exponential family. The modeler posits a prior distribution $p(\theta)$, assumes the data $x_{1:n}$ is an iid sample from an exponential family model $p(x \mid \theta)$, and wishes to compute the posterior $p(\theta \mid x_{1:n})$. An exponential family in natural parameterization has density
\[
p(x \mid \eta) = h(x) \expp{\eta^\top t(x) - A(\eta) },
\]
where $\eta$ are the natural parameters, $t(x)$ is the sufficient statistic, $A(\eta) = \int h(x) \expp{\eta^\top t(x)} dx$ is the log-partition function, and $h(x)$ is the base measure. The density
of the full data is
\[
p(x_{1:n} \mid \eta) = h(x_{1:n}) \expp{\eta^\top t(x_{1:n}) - nA(\eta)}, 
\]
where $h(x_{1:n}) = \prod_{i=1}^n h(x_i)$ and $t(x_{1:n}) = \sum_{i=1}^n t(x_i)$. Notice that once normalizing constants are dropped, this density is dependent on the data only directly through the sufficient statistics, $s = t(x_{1:n})$. 

We will write exponential families more generally as $p(x \mid \theta)$ to indicate the case when the natural parameters $\eta = \eta(\theta)$ depend on a different parameter vector $\theta$.

Every exponential family distribution has a conjugate prior distribution $p(\theta; \lambda)$\citep{diaconis1979conjugate} with hyperparameters $\lambda$. A conjugate prior has the property that, if it is used as the prior, then the posterior belongs to the same family, i.e., $p(\theta \mid x_{1:n}; \lambda) = p(\theta; \lambda')$ for some $\lambda'$ that depends only on $\lambda$, $n$, and the sufficient statistics $s$. We write this function as $\lambda' = \text{Conjugate-Update}(\lambda, s, n)$; our methods are not tied to the specific choice of conjugate prior, only that the posterior parameters can be calculated in this form. See supplementary material for a general form of Conjugate-Update.

\subsection{Release Algorithm: Noisy Sufficient Statistics} 
\label{sec:release_model}    

If privacy were not a concern, the Bayesian modeler would simply compute the sufficient statistics $s=t(x_{1:n})$ and use them to update the posterior beliefs. However, to maintain privacy, the modeler must access the sensitive data only through a randomized release mechanism $\mathcal{A}$. As a result, in order to obtain proper posterior beliefs the modeler must account for the randomization of the release mechanism by performing inference.

We take the simple approach of releasing noisy sufficient statistics via the Laplace mechanism, as in~\cite{ZZhang:2016aa,Foulds:2016aa,bernstein2017differentially}. Sufficient statistics are a natural quantity to release. They are an ``information bottleneck''---a finite-dimensional quantity that captures all the relevant information about $\theta$. The released value is $y = \A(x_{1:n}) \sim \Lap(s, \Delta_s/\epsilon)$.  Because $s = t(x_{1:n}) = \sum_{i=1}^n t(x_i)$ is a sum over individuals, the sensitivity is $\Delta_s = \max_{x,x' \in \R^d}\| t(x) - t(x')\|_1$. When $t(\cdot)$ is unbounded this quantity becomes infinite; we will modify the release mechanism so the sensitivity is finite~(Sec.~\ref{sec:unbounded}). 

\subsection{Basic Inference Approach: Bounded Sufficient Statistics} 

The goal of the inference algorithm $\P$ is to compute $p(\theta \mid y)$. We first develop the basic approach for the simpler case when $t(x)$ is bounded, and then extend both $\A$ and $\P$ to handle the unbounded case. The full joint distribution of the probability model can be expressed as:
\[
p(\theta, s, y) = p(\theta) \, p(s \mid \theta) \, p(y \mid s),
\]
where $p(\theta) = p(\theta; \lambda)$ is a conjugate prior and the goal is to compute a representation of $p(\theta \mid y) \propto \int_s p(\theta, s, y)ds$ by integrating over the sufficient statistics.

We will develop a Gibbs sampler to sample from this distribution. There are two main challenges. First, the distribution $p(s \mid \theta)$ is obtained by marginalizing over the data sample $x_{1:n}$, and is usually not known in closed form. We will address this with an asymptotically correct normal approximation. Second, when resampling $s$ within the Gibbs algorithm, we require the full conditional distribution of $s$ given the other variables, which is proportional to $p(s | \theta) p(y \mid s)$. Care must be taken to make it easy to sample from this conditional distribution. We address this via variable augmentation. We discuss our approach to both challenges in detail below.

\paragraph{Normal approximation of $p(s \mid \theta)$.}
The exact form of the sufficient statistic distribution $p(s \mid \theta)$ is obtained by marginalizing over the data:
\[
p(s \mid \theta) = \int_{t^{-1}(s)} p(x_{1:n} \mid \theta) dx_{1:n}, \qquad t^{-1}(s) := \big\{ x_{1:n} : t(x_{1:n}) = s\big\}.
\]
In general, the exact form of this distribution is not available. In some cases, it is---for example if $x \sim \text{Bernoulli}(\theta)$ then $s \sim \text{Binomial}(n, \theta)$---but even then it may not lead to a tractable full conditional for $s$.

Properties of exponential families pave the way toward a general approach that always leads to a tractable full conditional. By the central limit theorem (CLT), because $s = \sum_{i} t(x_i)$ is a sum of iid random variables, it is asymptotically normal. It can be approximated as $p(s \mid \theta) \approx  \N(s; n\mu, n\Sigma)$, where $\mu = \E[t(x)]$ and $\Sigma = \Var[t(x)]$ are the mean and variance of the sufficient statistic of a single individual. This approximation is asymptotically correct: $\frac{1}{\sqrt{n}}(s - n\mu) \xrightarrow[]{D} \N(0, \Sigma)$~\cite{bickel2015mathematical}.
The quantities $\mu$ and $\Sigma$ can be computed using well-known properties of exponential families~\cite{bickel2015mathematical}:
\begin{equation}
\label{eq:cumulants}
\mu = \E[t(x)] = \frac{\del}{\del \eta^\top} A(\eta), \qquad \Sigma = \Var[t(x)] = \frac{\del^2}{\del \eta \del \eta^\top}A(\eta),
\end{equation}
where $\eta = \eta(\theta)$ is the natural parameter.

Note that we will \emph{not} use this approximation for Gibbs updates of $\theta$. Instead, we will compute the conditional $p(\theta \mid s)$ using standard conjugacy formulas. In this sense, we maintain two views of the joint distribution $p(\theta, s)$---when updating $\theta$, it is the standard exponential family model, which leads to conjugate updates; when updating $s$, it is approximated as $p(\theta)\N(s; n\mu, s\Sigma)$, which will lead to simple updates when combined with a variable augmentation technique.

\paragraph{Variable augmentation for $p(y \mid s)$.}
We seek a tractable form for the full conditional of $s$ under the normal approximation, which is the product of a normal density and a Laplace density:
\[
p(s \mid \theta, y) \propto \N(s; n\mu, n\Sigma) \, \Lap(y; s, \Delta_s/\epsilon).
\]
A similar situation arises in the Bayesian Lasso~\citep{Park:2008aa}, and we will employ the same variable augmentation trick. A Laplace random variable $z \sim \Lap(u, b)$ can be written as a scale mixture of normals by introducing a latent variable $\sigma^2 \sim \Exp(1/(2b^2))$, i.e., the distribution with density $1/(2b^2)\expp{-\sigma^2/(2b^2)}$ and letting $z \sim \N(u, \sigma^2)$. We apply this separately to each dimension of the vector $y$ so that:
\[
\sigma_j^2 \sim \Exp\Bigg(\frac{\epsilon^2}{2\Delta_s^2}\Bigg), \quad y \sim \N\big(s, \diag(\sigma^2)\big).
\]

\begin{figure}[t]
\begin{minipage}[t]{.62\textwidth}
\algtext*{Until}
\begin{algorithm}[H]
\caption{Gibbs Sampler, Bounded $\Delta_s$} \label{alg:gibbs} 
\begin{algorithmic}[1]
\State Initialize $\theta, s, \sigma^2$
\Repeat
\vspace{2pt}
\State $\theta \sim p(\theta; \lambda')$ where $\lambda' = \text{Conjugate-Update}(\lambda, s, n)$
\vspace{2pt}
\State Calculate $\mu = \E[s]$ and $\Sigma = \Var[s]$ (e.g., use Eq.~\eqref{eq:cumulants})
\vspace{2pt}
\State $s \sim \text{NormProduct}\left(n\mu, n\Sigma, y, \diag(\sigma^2)\right)$
\vspace{2pt}
\State $1/\sigma^2_j \sim \text{InverseGaussian}\Big(\frac{\epsilon}{\Delta_s\mid y-s\mid}, \frac{\epsilon^2}{\Delta_s^2}\Big)$
\Until{}
\end{algorithmic}
\end{algorithm}
\end{minipage}%
\hfill
\begin{minipage}[t]{.35\textwidth}
\floatname{algorithm}{Subroutine}
\begin{algorithm}[H]
\renewcommand{\thealgorithm}{}
\caption{NormProduct} \label{alg:normprod}
\begin{algorithmic}[1]
\State \textbf{input:} $\mu_1, \Sigma_1, \mu_2, \Sigma_2$
\vspace{5pt}
\State $\Sigma_3 = \left(\Sigma_1^{-1} + \Sigma_2^{-1}\right)^{-1}$
\vspace{5pt}
\State $\mu_3 = \Sigma_3\left(\Sigma_1^{-1}\mu_1 + \Sigma_2^{-1}\mu_2\right)$
\vspace{5pt}
\State \textbf{return:} $\N(\mu_3, \Sigma_3)$
\end{algorithmic}
\end{algorithm}
\floatname{algorithm}{Algorithm}
\end{minipage}
\end{figure}


\begin{wrapfigure}{R}{2.6in}
  \fbox{
  \begin{minipage}{1.6in}
      \vspace{-7pt}
      \small
      \begin{align*}
        \theta       &\sim p(\theta; \lambda) \\
        s            &\sim \N(n\mu, n\Sigma) \\
        \sigma^2_j   &\sim \Exp\left(\frac{\epsilon^2}{2\Delta_s^2}\right) \text{ for all $j$ } \\
        y            &\sim \N\big(s, \diag(\sigma^2)\big)
      \end{align*}
    \end{minipage}%
    \begin{minipage}{0.85in}
    \centering
    \includegraphics[width=0.6in]{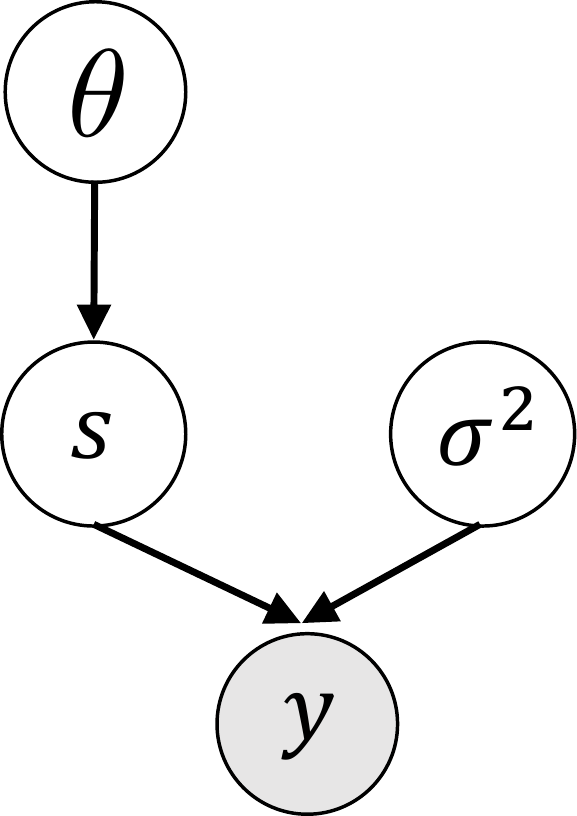}
    \end{minipage}
  }
\end{wrapfigure}
\paragraph{The Gibbs Sampler.} 

After the normal approximation and variable augmentation, the generative process is as shown to the right. The final Gibbs sampling algorithm is shown in Algorithm~\ref{alg:gibbs}. Note that the update for $\theta$ is based on conjugacy in the exact distribution $p(\theta, s)$, while the update for $s$ uses the density of the generative process to the right, so that $p(s \mid \theta, \sigma^2, y) \propto p(s \mid \theta) \, p(y \mid \sigma^2, s)$, which is a product of two normal densities
\begin{align*}
\N(s; n\mu, n\Sigma) \, \N\big(y; s, \diag(\sigma^2)\big) \propto \N(s; \mu_s, \Sigma_s),
\end{align*}
where $\mu_s$ and $\Sigma_s$ are are defined in Algorithm~\ref{alg:gibbs}~\cite{petersen2008matrix}.
The update for $\sigma^2$ follows~\citet{Park:2008aa}; the inverse Gaussian density is $\text{InverseGaussian}(x; m, v) = \sqrt{v/(2\pi x^3)} \expp{-v(x-m)^2/(2m^2x)}$.
Full derivations are given in the supplement.

\eat{
  \begin{align*}
  p(\theta \mid s) &= p(\theta \mid \lambda') \\
  \lambda' &= \text{Conjugate-Update}(\lambda, s, n) \\
  %
  p(s \mid \theta, y, u) &\propto p(s \mid \theta) \, p(u \mid \sigma^2) \I\{y = s + u\} \\
  &= \N(s; n\mu, n\Sigma) \, \N(y - s; 0, \sigma^2) \\
  p(u \mid \theta, s, y, \sigma^2) &\propto p(s \mid \theta) \, p(u \mid \sigma^2) \I\{y = s + u\}\\
  &= \N(y-u; n\mu, n\Sigma) \, \N(u; 0, \sigma^2) \\
  %
  p(\sigma^2 \mid u) &= \text{InverseGaussian}\left(\sigma^2; \frac{\Delta_s}{u\epsilon}, \frac{\Delta_s^2}{\epsilon^2}\right)
\end{align*}
}



\subsection{Unbounded Sufficient Statistics and Truncated Exponential Families} 
\label{sec:unbounded}
The Laplace mechanism does not apply when the sufficient statistics are unbounded, because $\Delta_s = \max_{x,y} \|t(x)-t(y)\|_1 = \infty$. Thus, we need a new release mechanism $\A$ and inference algorithm $\P$. We present a solution for the case when $x$ is univariate. All elements of the solution can generalize to higher dimensions, except that one step will have running time that is exponential in $d$; we leave improvement of this to future work and focus on the simpler univariate case.

\paragraph{Release mechanism.} Our solution is to truncate the support of the (now univariate) $p(x \mid \theta)$ to $x \in [a, b]$, where $a$ and $b$ are finite bounds provided by the modeler. If the modeler cannot select bounds \emph{a priori}, they may be selected privately as a preliminary step using a variant of the exponential mechanism (see \texttt{PrivateQuantile} in \citet{Smith:2011aa}).\footnote{Selecting truncation bounds will consume some of the privacy budget and modify the release mechanism $\A$. We do not consider inference with respect to this part of the release mechanism.} Then, given truncation bounds, the data owner redacts individuals where $x_i \notin [a,b]$ and reports the truncated sufficient statistics $\hat{s} = \sum_{i=1}^n \1_{[a,b]}(x_i) \cdot t(x_i)$ where $\1_S(x)$ is the indicator function of the set $S$. The sensitivity of $\hat{s}$ is now $\Delta_{\hat{s}} = \max_{x, y \in \R} \|\hat{t}(x) - \hat{t}(y)\|_1$ where $\hat{t}(x) = \1_{[a,b]}(x)\, t(x)$. An easy upper bound for this quantity (see supplement) is:
\[
\Delta_{\hat{s}} \leq \sum_{j=1}^d \max\Big\{\max_{x \in [a, b]} |t_j(x)|, \max_{x,y \in [a,b]} \big| t_j(x) -  t_j(y)\big|\Big\},
\]
where $t_j(x)$ is the $j$th component of the sufficient statistics. The bounds $[a, b]$ will be selected so this quantity is bounded. The released value is $y \sim \Lap(\hat{s}, \Delta_{\hat{s}}/\epsilon)$.

\paragraph{Inference: Truncated Exponential Family.}
Several new challenges arise for inference. The quantity $\hat{s}$ is no longer a sufficient statistic for the model $p(x \mid \theta)$, and we will need new insights to understand $p(\hat{s} \mid \theta)$ and $p(\theta \mid \hat{s})$. Since $\hat{s}$ is a sum over individuals where $x_i \in [a, b]$, it will be useful to examine the probability of the event $x \in [a, b]$ as well as the conditional distribution of $x$ given this event. To facilitate a general development, assume a generic truncation interval $[v, w]$, not necessarily equal to $[a, b]$. Let $F(x; \theta) = \int_{-\infty}^x p(x \mid \theta)dx$ be the CDF of the original (univariate) exponential family model. It is clear that $\Pr(x \in [v, w]) = F(w; \theta) - F(v; \theta)$. The conditional distribution of $x$ given $x \in [v, w]$ is a \emph{truncated} exponential family, which, in its natural parameterization is:
\begin{equation}
\label{eq:trunc}
\hat{p}(x \mid \eta) = \1_{[v,w]}(x)\, h(x)\, \expp{\eta^T t(x) - \hat{A}(\eta)},
\quad
\hat{A} = \int_v^w h(x) \expp{\eta^T t(x)}dx.
\end{equation}
Note that this is still an exponential family model (with a modified base measure), and all of the standard results apply, such as the existence of a conjugate prior and the formulas in Eq.~\eqref{eq:cumulants} for the mean and variance of $t(x)$ under the truncated distribution.


\paragraph{Random sum CLT for $p(\hat{s} \mid \theta)$.}
We would like to again apply an asympotic normal approximation for $\hat{s}$, but we do not know how many individuals fall within the truncation bounds.
The ``random sum CLT'' of \citet{robbins1948asymptotic} applies to the setting where the number of terms in the sum is itself a random variable. The sum can be rewritten as
$\hat{s} = \sum_{k=1}^{N} t(x_{i_k})$,
where $\{i_1, \ldots, i_{N}\}$ is the set of indices of individuals with data inside the truncation bounds, i.e., the indices such that $x_{i_k} \in [v, w]$. The number $N$ is now a random variable distributed as $N \sim \Binom(n, q)$, where $q = F(w; \theta) - F(v; \theta)$.

\begin{proposition}
  \label{prop:clt}
  Let $\hat{\mu} = \E_{\hat{p}}[t(x)]$ and $\hat{\Sigma} = \Var_{\hat{p}}[t(x)]$ be the mean and variance of $t(x)$ in the truncated exponential family. Then $\hat{s} = \sum_{k=1}^N t(x_{i_k})$ is asympotically normal with mean and variance:
  \begin{align*}
  \m &:= \E[\hat{s}] = \E[N] \hat{\mu} = nq \hat{\mu}, \\
  \V &:= \Var(\hat{s}) = \E[N] \hat{\Sigma} + \Var[N]\hat{\mu} \hat{\mu}^\top =
  nq\hat{\Sigma} + nq(1-q) \hat{\mu} \hat{\mu}^\top.
  \end{align*}
  Specifically, $\frac{1}{\sqrt{n}}\big(\hat{s} - \m\big) \xrightarrow[]{D} \N(0, \bar{\Sigma})$ as $n \to \infty$, where $\bar{\Sigma} = \V/n = q \hat{\Sigma} + q(1-q) \hat{\mu}\hat{\mu}^\top$.
\end{proposition}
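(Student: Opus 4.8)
The plan is to observe that, although $\hat{s}$ is written as a sum of a random number $N$ of terms, it is in fact an ordinary sum of $n$ iid random vectors, so both the moment identities and the asymptotic normality follow from elementary facts about iid sums; the random-sum CLT of \citet{robbins1948asymptotic} can be invoked but is not actually needed. Define $\hat{t}(x) := \1_{[v,w]}(x)\,t(x)$, so that $\hat{s} = \sum_{i=1}^n \hat{t}(x_i)$ with $x_1,\dots,x_n$ iid from $p(x\mid\theta)$, and hence $\hat{t}(x_1),\dots,\hat{t}(x_n)$ iid. The three steps are: (i) identify the conditional law of $x$ given $x\in[v,w]$ with the truncated family $\hat{p}$ of Eq.~\eqref{eq:trunc}; (ii) compute the mean and covariance of $\hat{t}(x)$ and match them to $q\hat\mu$ and $\bar\Sigma$; (iii) apply the multivariate CLT.

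For steps (i)--(ii): by construction the conditional density of $x$ given the event $x\in[v,w]$ is $p(x\mid\theta)/q$ supported on $[v,w]$, which is exactly $\hat{p}(x\mid\eta)$ of Eq.~\eqref{eq:trunc}. Therefore $\E[\hat{t}(x)] = \Pr(x\in[v,w])\,\E_{\hat{p}}[t(x)] = q\hat\mu$ and $\E[\hat{t}(x)\hat{t}(x)^\top] = q\,\E_{\hat{p}}[t(x)t(x)^\top] = q(\hat\Sigma + \hat\mu\hat\mu^\top)$, whence
\[
\mathrm{Cov}\big(\hat{t}(x)\big) = q(\hat\Sigma + \hat\mu\hat\mu^\top) - q^2\hat\mu\hat\mu^\top = q\hat\Sigma + q(1-q)\hat\mu\hat\mu^\top = \bar\Sigma .
\]
Summing over the $n$ iid individuals gives $\E[\hat{s}] = nq\hat\mu = \m$ and $\Var(\hat{s}) = n\bar\Sigma = \V$, which is the pair of formulas claimed. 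The equivalent forms $\E[\hat{s}] = \E[N]\hat\mu$ and $\Var(\hat{s}) = \E[N]\hat\Sigma + \Var[N]\hat\mu\hat\mu^\top$ are recovered by conditioning on $N = \sum_{i=1}^n \1_{[v,w]}(x_i) \sim \Binom(n,q)$ and applying the laws of total expectation and total variance, using the standard fact that given $N = m$ the retained values are iid $\hat{p}$; this is also the latent-variable decomposition $\hat{s} = N\hat\mu + \sum_{k=1}^N\big(t(x_{i_k}) - \hat\mu\big)$ underlying the Gibbs sampler of Section~\ref{sec:unbounded}.

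For step (iii): assuming $\bar\Sigma$ is finite---which holds as long as $\eta = \eta(\theta)$ lies in the interior of the natural parameter space, the standard regularity assumption for exponential families, since then $\hat{A}(\eta)$ in Eq.~\eqref{eq:trunc} is finite and smooth and $\hat\mu,\hat\Sigma$ are given by Eq.~\eqref{eq:cumulants}---the multivariate Lindeberg--L\'{e}vy CLT applied to the iid vectors $\hat{t}(x_i) - q\hat\mu$ gives $\frac{1}{\sqrt{n}}\sum_{i=1}^n\big(\hat{t}(x_i) - q\hat\mu\big) = \frac{1}{\sqrt{n}}(\hat{s} - \m) \xrightarrow{D} \N(0,\bar\Sigma)$, as claimed. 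If one prefers to route through \citet{robbins1948asymptotic}: from $\hat{s} - \m = \sum_{k=1}^N(t(x_{i_k}) - \hat\mu) + \hat\mu(N - nq)$, the first term scaled by $1/\sqrt{n}$ converges to $\N(0, q\hat\Sigma)$ by the random-sum CLT (using $N/n\to q$ in probability), the second to $\N(0, q(1-q)\hat\mu\hat\mu^\top)$ by the ordinary CLT, and the two are exactly uncorrelated because conditioning on $N$ makes the first factor mean-zero; a joint argument then reassembles $\N(0,\bar\Sigma)$. I would present the direct iid argument, which is cleaner.

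The proof is essentially mechanical. The only places that warrant a sentence of care are the identification in step (i) (immediate from the form of Eq.~\eqref{eq:trunc}) and the finiteness of $\bar\Sigma$ when $t$ is unbounded on $[v,w]$, which I would handle by stating ``$\eta$ in the interior of the natural parameter space'' as a standing assumption. I do not expect a genuine obstacle: the phrase ``random sum'' might suggest the usual subtlety of a randomly-indexed limit theorem---the dependence between the sum and its index---but the rewriting $\hat{s} = \sum_{i=1}^n \hat{t}(x_i)$ removes it entirely.
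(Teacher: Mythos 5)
Your proof is correct, and it takes a genuinely different route from the paper's. The paper's proof is a two-line citation: each retained term has mean $\hat{\mu}$ and variance $\hat{\Sigma}$, the count is $N \sim \Binom(n,q)$, and the conclusion is delegated entirely to the random-sum CLT of Robbins. You instead observe that $\hat{s} = \sum_{i=1}^n \1_{[v,w]}(x_i)\,t(x_i)$ is an \emph{ordinary} sum of $n$ iid vectors $\hat{t}(x_i)$, compute $\E[\hat{t}(x)] = q\hat{\mu}$ and $\mathrm{Cov}(\hat{t}(x)) = q\hat{\Sigma} + q(1-q)\hat{\mu}\hat{\mu}^\top$ directly (your moment algebra checks out, including the identification of the conditional law of $x$ given $x\in[v,w]$ with $\hat{p}$ via $\hat{A}(\eta) = A(\eta) + \log q$), and invoke the classical Lindeberg--L\'{e}vy CLT. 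What your approach buys is self-containedness and the elimination of the usual random-index subtlety --- here $N$ is not an external stopping variable but is itself a function of the same sample, so the randomly-indexed sum collapses to a deterministic-length iid sum; your version also delivers the mean and variance formulas as a byproduct rather than quoting them. What the paper's citation buys is brevity and a form of the statement ($\E[N]\hat{\mu}$, $\E[N]\hat{\Sigma} + \Var[N]\hat{\mu}\hat{\mu}^\top$) that matches the latent-count view used in the Gibbs sampler; you recover those forms too via the laws of total expectation and variance. Your caveat about finiteness of $\bar{\Sigma}$ on the half-infinite intervals $[-\infty,a]$ and $[b,\infty]$ (where $t$ may be unbounded) is a point the paper glosses over, and stating the interior-of-natural-parameter-space assumption is the right way to handle it.
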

\begin{proof}
  Each term of the sum has mean $\hat{\mu}$ and variance $\hat{\Sigma}$, and the number of terms is $N \sim \Binom(n, q)$. The result follows from~\citet{robbins1948asymptotic}.
\end{proof}

\paragraph{Computing $\hat{\mu}$ and $\hat{\Sigma}$ by automatic differentiation (autodiff).}
To use the normal approximation we need to compute $\hat{\mu}$ and $\hat{\Sigma}$. 
\begin{lemma}
  \label{lemma:autodiff}
  Let $p(x \mid \theta)$ be a univariate exponential family model and let $\hat{p}(x \mid \theta)$ be the corresponding exponential family model truncated to generic interval $[v, w]$. Then 
  \begin{align}
    \label{eq:mean}
    \hat{\mu} &= \E_{\hat{p}}[t(x)] = \E_{p}[t(x)] + \frac{\del}{\del \eta^T}\log\big(F(w; \eta) - F(v; \eta)\big) \\
    \label{eq:variance}
    \hat{\Sigma} &= \Var_{\hat{p}}[t(x)] = \Var_{p}[t(x)] + \frac{\del^2}{\del \eta \del \eta^T}\log\big(F(w; \eta) - F(v; \eta)\big)
  \end{align}
\end{lemma}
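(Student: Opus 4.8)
The plan is to express the log-partition function of the truncated family in terms of that of the untruncated family and the truncation mass $q = F(w;\eta)-F(v;\eta)$, and then differentiate.

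\textbf{Step 1 (cumulant identities).} Recall that for \emph{any} exponential family the mean of the sufficient statistic is the gradient of the log-partition function and its variance is the Hessian (Eq.~\eqref{eq:cumulants}); as noted just before the lemma, this applies verbatim to the truncated family of Eq.~\eqref{eq:trunc}, whose log-partition function I write as $\hat{A}(\eta) = \log \int_v^w h(x)\expp{\eta^T t(x)}\,dx$. Hence $\hat{\mu} = \frac{\del}{\del \eta^T}\hat{A}(\eta)$ and $\hat{\Sigma} = \frac{\del^2}{\del\eta\,\del\eta^T}\hat{A}(\eta)$, while for the untruncated model $\mu = \frac{\del}{\del\eta^T}A(\eta)$ and $\Sigma = \frac{\del^2}{\del\eta\,\del\eta^T}A(\eta)$.

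\textbf{Step 2 (key identity).} Pulling $e^{A(\eta)}$ out of the integral gives
\[
\int_v^w h(x)\expp{\eta^T t(x)}\,dx = e^{A(\eta)} \int_v^w h(x)\expp{\eta^T t(x) - A(\eta)}\,dx = e^{A(\eta)}\big(F(w;\eta) - F(v;\eta)\big),
\]
so that $\hat{A}(\eta) = A(\eta) + \log\big(F(w;\eta) - F(v;\eta)\big)$ — simply the statement that the truncated density is $p(x\mid\eta)$ restricted to $[v,w]$ and renormalized by its mass $q$.

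\textbf{Step 3 (differentiate).} Apply $\frac{\del}{\del\eta^T}$ and $\frac{\del^2}{\del\eta\,\del\eta^T}$ to the identity of Step 2 and substitute the cumulant identities of Step 1 on both sides; this yields exactly Eqs.~\eqref{eq:mean} and \eqref{eq:variance}. The computation is elementary, so I do not expect a genuine obstacle; the only points requiring care are the standard regularity conditions — that $\eta$ lies in the interior of the natural parameter space, so $A$ and $\hat{A}$ are smooth and differentiation commutes with the integral, and that $q = F(w;\eta)-F(v;\eta) > 0$, so the logarithm and the truncated family are well defined. The real content of the lemma is the observation in Step 2/3 that these derivatives of $\log q$ are obtainable by automatic differentiation through the CDF $F$, which is what makes the normal approximation of Proposition~\ref{prop:clt} computable in practice.
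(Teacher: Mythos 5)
Your proof is correct and follows essentially the same route as the paper: derive $\hat{A}(\eta) = A(\eta) + \log\big(F(w;\eta) - F(v;\eta)\big)$ by factoring $e^{A(\eta)}$ out of the truncated integral, then apply the cumulant identities of Eq.~\eqref{eq:cumulants} to the truncated family and differentiate. Your added remarks on regularity conditions are a minor refinement, not a different argument.
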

\begin{proof}
  It is straightforward to derive from Eq.~\eqref{eq:trunc} that $\hat{A}(\eta) =  A(\eta) + \log \big(F(w; \eta) - F(v; \eta) \big)$. The result follows from applying Eq.~\eqref{eq:cumulants} to this expression for $\hat{A}(\eta)$. See the supplement for derivation of $\hat{A}(\eta)$ and proof of this lemma.
\end{proof}
We will use Equations~\eqref{eq:mean} and~\eqref{eq:variance} to compute $\hat{\mu}$ and $\hat{\Sigma}$ by using autodiff to compute the desired derivatives. If the mean and variance $\E_p[t(x)]$ and $\Var_p[t(x)]$ of the untruncated distribution are not known, we can apply autodiff to compute them as well using Eq.~\eqref{eq:cumulants}.

When $x$ is multivariate, analogous expressions can be derived for $\hat{\mu}$ and $\hat{\Sigma}$. The adjustment factors will include multivariate CDFs, with a number of terms that grow exponentially in $d$. This is currently the main limitation in applying our methods to multivariate models with unbounded sufficient statistics.

\paragraph{Conjugate updates for $p(\theta \mid \hat{s})$.}
The final issue is the distribution $p(\theta \mid \hat{s})$, which is no longer characterized by conjugacy because $\hat{s}$ are not the full sufficient statistics. We again turn to variable augmentation. Let $\hat{s}_\ell = \sum_{i=1}^n \1_{[-\infty,a]}t(x_i)$ and $\hat{s}_u = \sum_{i=1}^n \1_{[b,\infty]} t(x_i)$ be the sufficient statistics for the individuals that fall in the lower portion $[-\infty, a]$ and upper portion $[b, \infty]$ of the support of $x$, respectively. We will instantiate $\hat{s}_\ell$ and $\hat{s}_u$ as latent variables and model their distributions using the random sum CLT approximation from Prop.~\ref{prop:clt} and Lemma~\ref{lemma:autodiff} (but with different truncation bounds). Let $\hat{s}_c = \hat{s}$ be the sufficient statistics for the ``center'' portion, and define the three truncation intervals as $[v_\ell, w_\ell] = [-\infty, a]$, $[v_c, w_c] = [a, b]$ and $[v_u, w_u] = [b, \infty]$.
The full sufficient statistics are equal to $s = \hat{s}_\ell + \hat{s}_c + \hat{s}_u$. Conditioned on all other variables, \emph{each} component is multivariate normal, so the sum $s$ is also multivariate normal. We can therefore sample $s$ and then sample from $p(\theta \mid s)$ using conjugacy. We will also need to draw $\hat{s}_c$ separately to be used to update $\sigma^2$.

\begin{figure}[t]
  \begin{minipage}[t]{.59\textwidth}
    \begin{algorithm}[H]
      {
        \floatname{algorithm}{\small Algorithm}
        \renewcommand{\thealgorithm}{2}
        \caption{\small Gibbs Sampler, Unbounded $\Delta_s$} \label{alg:gibbs-trunc} 
        \begin{algorithmic}[1]
          \small
          \State Initialize $\theta, \hat{s}, \sigma^2, a, b$
          \vspace{3pt}

          \State $[v_\ell, w_\ell] \leftarrow [-\infty, a]$
          \vspace{3pt}

          \State $[v_c, w_c] \leftarrow [a, b]$
          \vspace{3pt}

          \State $[v_u, w_u] \leftarrow [b, \infty]$
          \vspace{3pt}

          \Repeat{}

          \State $\m_r, \V_r \leftarrow \text{RS-CLT}(\theta, v_r, w_r)$ for $r \in \{\ell, c, u\}$\!\!
          \vspace{3pt}

          \State $\m_c', \V'_c \leftarrow \text{NormProduct}\left(\m_c, \V_c, y, \diag\left(\sigma^2\right)\right)$
          \vspace{2pt}

          \State $s \sim \N(\m_\ell + \m'_c + \m_u, \V_\ell + \V_c' + \V_u)$ \label{eq:ss_sample}
          \vspace{5pt}

          \State $\theta \sim p(\theta; \lambda')$ where $\lambda' = \text{Conjugate-Update}(\lambda, s, n)$
          \vspace{5pt}

          \State Recalculate $\m_c$ and $\V_c$, then draw $\hat{s}_c \sim \N(\m_c, \V_c)$
          \vspace{5pt}

          \State $1/\sigma^2_j \sim \text{InverseGaussian}\Big(\frac{\epsilon}{\Delta_{\hat{s}}\mid y-\hat{s}_c\mid}, \frac{\epsilon^2}{\Delta_{\hat{s}}^2}\Big)$
          \vspace{5pt}

          \Until{}
        \end{algorithmic}
      }
    \end{algorithm}
  \end{minipage}%
  \hfill
  \begin{minipage}[t]{.38\textwidth}
    \floatname{algorithm}{\small Algorithm}
    \begin{algorithm}[H]
      \renewcommand{\thealgorithm}{3}
      \caption{\small RS-CLT} \label{alg:randomsumclt}
      \begin{algorithmic}[1]
        \small 
        \State \textbf{input:} $\theta, v, w$
        \vspace{5pt}
        \State $q \leftarrow F(b; w) - F(a; v)$
        \vspace{5pt}
        \State $\hat{\mu},\hat{\Sigma} \leftarrow$ autodiff of Eqns. \ref{eq:mean}, \ref{eq:variance}
        \vspace{5pt}
        \State $\m \leftarrow  nq$
        \vspace{5pt}
        \State $\V \leftarrow nq \hat{\Sigma} + nq(1-q) \hat{\mu} \hat{\mu}^\top$
        \vspace{2pt}
        \State \textbf{return:} $\m, \V$
      \end{algorithmic}
    \end{algorithm}
    \floatname{algorithm}{Algorithm}
  \end{minipage}
\end{figure}

\paragraph{The Gibbs Sampler.} The (approximate) generative process in the unbounded case is:
\begin{align*}
  \theta       &\sim p(\theta; \lambda), \\
  \hat{s}_r    &\sim \N\big(\m_r, \; \V_r), \text{ for } r \in \{\ell,c,u\} \text{ where } \m_r, \V_r = \text{RS-CLT}(\theta, v_r, w_r)\\
  \sigma^2_j   &\sim \Exp\left(\frac{\epsilon^2}{2\Delta_{\hat{s}}^2}\right) \text{ for all $j$ }, \\
  y            &\sim \N\big(\hat{s}_c, \diag(\sigma^2)\big).
\end{align*}
The Gibbs sampler to sample from this distribution is given in Algorithm~\ref{alg:gibbs-trunc}. Note that in Line \ref{eq:ss_sample} we employ rejection sampling in which sufficient statistics are sampled until the values drawn are valid for the given data model, e.g., $s$ must be positive for the binomial distribution. The RS-CLT algorithm to compute parameters of the random sum CLT is shown in Algorithm~\ref{alg:randomsumclt}.


\section{Experiments} 
\label{sec:experiments}

\eat{
\begin{table}
  \small
  \begin{center}
    \caption{Models}
    \label{table:models}
    \begin{tabular}{ccccc}
      \hline
      \hline
      Model      & Prior     &  Multivariate $x$?  & Multivariate $s$? &  Bounded $s$? \\
      \hline
      Binomial     & Beta        &  no & no & yes \\
      Multinomial  & Dirichlet   &  yes & yes & yes \\
      Exponential  & Gamma       &  no & no & no \\
      Normal       & Normal/Inverse Gamma &  no & yes & no
    \end{tabular}
  \end{center}
\end{table}
}

We design experiments to measure the \emph{calibration} and \emph{utility} of our method for posterior inference. We conduct experiments for the binomial model with beta prior, the multinomial model with Dirichlet prior, and the exponential model with gamma prior. The last model is unbounded and requires truncation; we set the bounds to keep the middle 95\% of individuals, which is reasonable to assume known a priori for some cases, such as modeling human height.



\paragraph{Methods.} 

We run our Gibbs sampler for 5000 iterations after 2000 burnin iterations (see supplementary material for convergence results),
which we compare to two baselines. The first method uses the same release mechanism as our Gibbs sampler and performs conjugate updates using the noisy sufficient statistics~\cite{Foulds:2016aa,ZZhang:2016aa}. This method converges to the true posterior as $n \to \infty$ because the Laplace noise will eventually become negligible compared to sampling variability~\cite{Foulds:2016aa}. However, the noise is not negligible for moderate $n$; we refer to this method as ``naive''. For truncated models we allow the naive method to ``cheat'' by accessing the noisy \emph{untruncated} sufficient statistics $s$. Thus the method is not private, and receives strictly more information than our Gibbs sampler, but with the same magnitude noise. This allows us to demonstrate miscalibration without highly technical modifications to the baseline method to be able to deal with truncated sufficient statistics.
%

The second baseline is a version of the one-posterior sampling (OPS) mechanism~\cite{Wang:2015aa,Foulds:2016aa,ZZhang:2016aa}, which employs the exponential mechanism~\citep{McSherry:2007aa} to release samples from a privatized posterior. We release 100 samples using the method of~\cite{Foulds:2016aa}, each with $\epsilon_{ops} = \epsilon / 100$, such that the entire algorithm achieves $\epsilon$-differential privacy. Private MCMC sampling~\cite{Wang:2015aa} is a more sophisticated method to release multiple samples from a privatized posterior and could potentially make better use of the privacy budget; however, private MCMC will also necessarily be miscalibrated, and only achieves the weaker privacy guarantee of $(\epsilon, \delta)$-differential privacy for $\delta > 0$, so would not be direct comparable to our method. OPS serves as a suitable baseline that achieves $\epsilon$-differential privacy. We include OPS only for experiments on the binomial model, for which it requires the support of $\theta$ to be truncated to $[a_0, 1-a_0]$ where $a_0 > 0$. We set $a_0 = 0.1$. 

We also include a non-private posterior for comparison, which performs conjugate updates using the non-noisy sufficient statistics.        

\paragraph{Evaluation.} We evaluate both the \emph{calibration} and \emph{utility} of the posterior.
For calibration we adapt a method of \citet{cook2006validation}:
the idea is to draw iid samples $(\theta_i, x_i)$ from the joint model $p(\theta)p(x \mid \theta)$, and conduct posterior inference in each trial. Let $F_i(\theta)$
be the CDF of the \emph{true} posterior $p(\theta \mid x_i)$ in trial $i$. Then we know that $U_i = F_i(\theta_i)$ is uniformly distributed, because $\theta_i \sim p(\theta \mid x_i)$ (see supplementary material).
In other words, the actual parameter $\theta_i$ is equally likely to land at any quantile of the posterior. To test the posterior inference procedure, we instead compute $U_i$ as the quantile at which $\theta_i$ lands within a set of samples from the \emph{approximate} posterior. After $M$ trials of the whole procedure we test for  uniformity of $U_{1:M}$ using the Kolmogorov-Smirnov goodness-of-fit test~\cite{Massey:1951aa}, which measures the maximum distance between the empirical CDF of $U_{1:M}$ and the uniform CDF; lower values are better and zero corresponds to perfect uniformity. We also visualize the empirical CDFs to assess calibration qualitatively.
  




Higher utility of a private posterior is indicated by closeness to the non-private posterior, which we measure with \emph{maximum mean discrepancy} (MMD), a kernel-based statistical test to determine if two sets of samples are drawn from different distributions~\cite{gretton2012kernel}. Given $m$ i.i.d. samples $(p,q) \sim P \times Q$, an unbiased estimate of the MMD is \[\text{MMD}^2(P, Q) = \frac{1}{m(m-1)}\sum\nolimits_{i\neq j}^m \left(k(p_i,p_j)+k(q_i,q_j)-k(p_i,q_j)-k(p_j,q_i)\right),\]
where $k$ is a continuous kernel function; we use a standard normal kernel. The higher the value the more likely the two samples are drawn from different distributions.

\paragraph{Results.}


Figure \ref{fig:KS} shows the results for three models and varying $n$ and $\epsilon$. Our method (Gibbs) achieves the same calibration level as non-private posterior inference for all settings. The naive method ignores noise and is too confident about parameter values implied by treating the noisy sufficient statistics as true ones; it is only well-calibrated with increasing $n$ and $\epsilon$ when noise becomes negligible relative to population size. OPS is not calibrated because it samples from an over-dispersed version of $p(\theta \mid x)$.



Figure \ref{fig:QQ} shows the empirical CDF plots for $n=1000$ and $\epsilon = 0.01$. Our method and the non-private method are both perfectly calibrated. The naive method's over-confidence in the wrong sufficient statistics causes its posterior to usually be too tight at the wrong value; thus the true parameter always lies in a tail of the approximate posterior, so too much mass is placed near $0$ and $1$. OPS shows the opposite behavior: its posterior is always too diffuse, so the true parameter lies close to the middle.
For multinomial we show measures only for the parameter of the first category, but results hold for all categories.

\begin{figure}
  \begin{minipage}[b]{.65\linewidth}
      \centering
      \includegraphics[width=\textwidth]{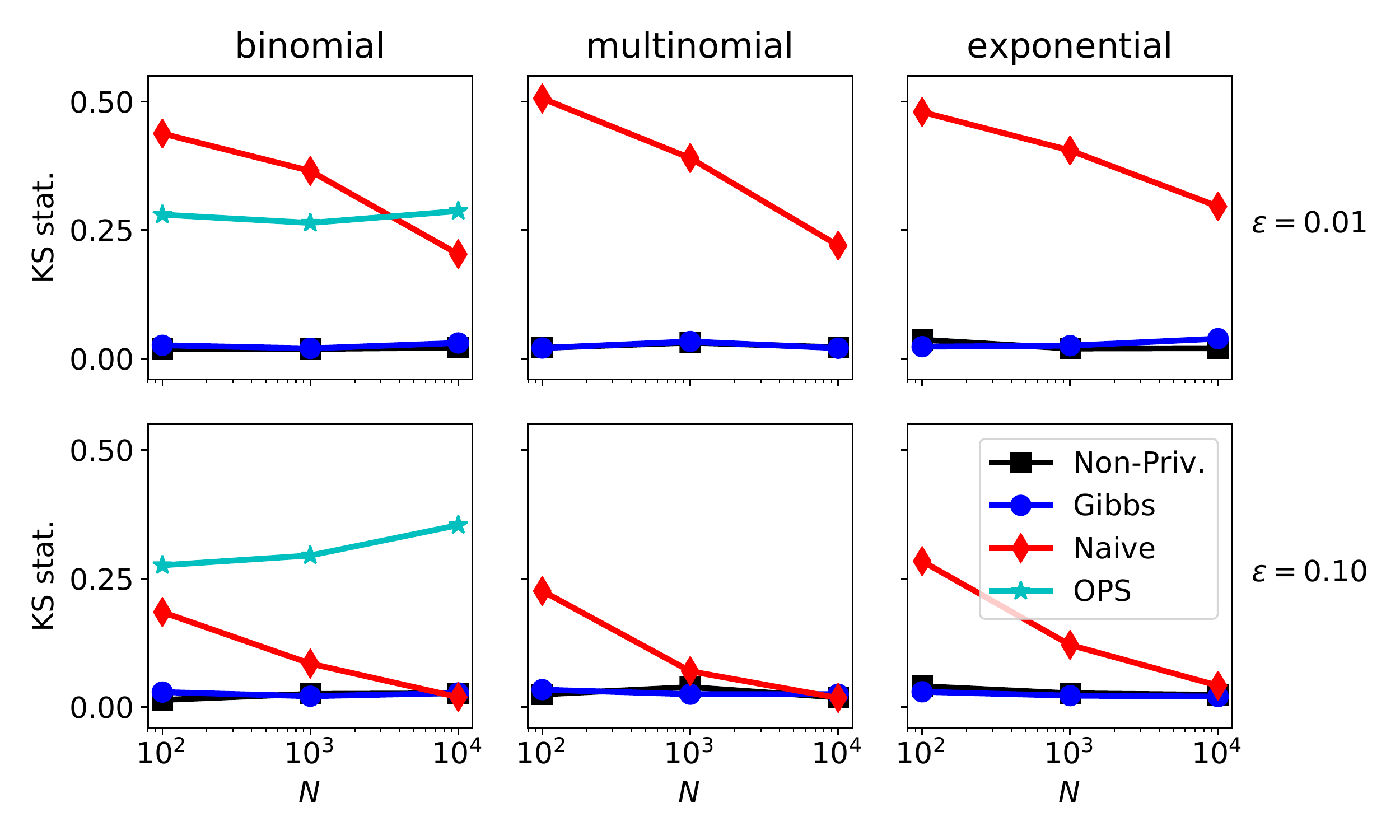}
      \subcaption{\label{fig:KS}}
      \vspace{.14in}
      \includegraphics[width=\textwidth]{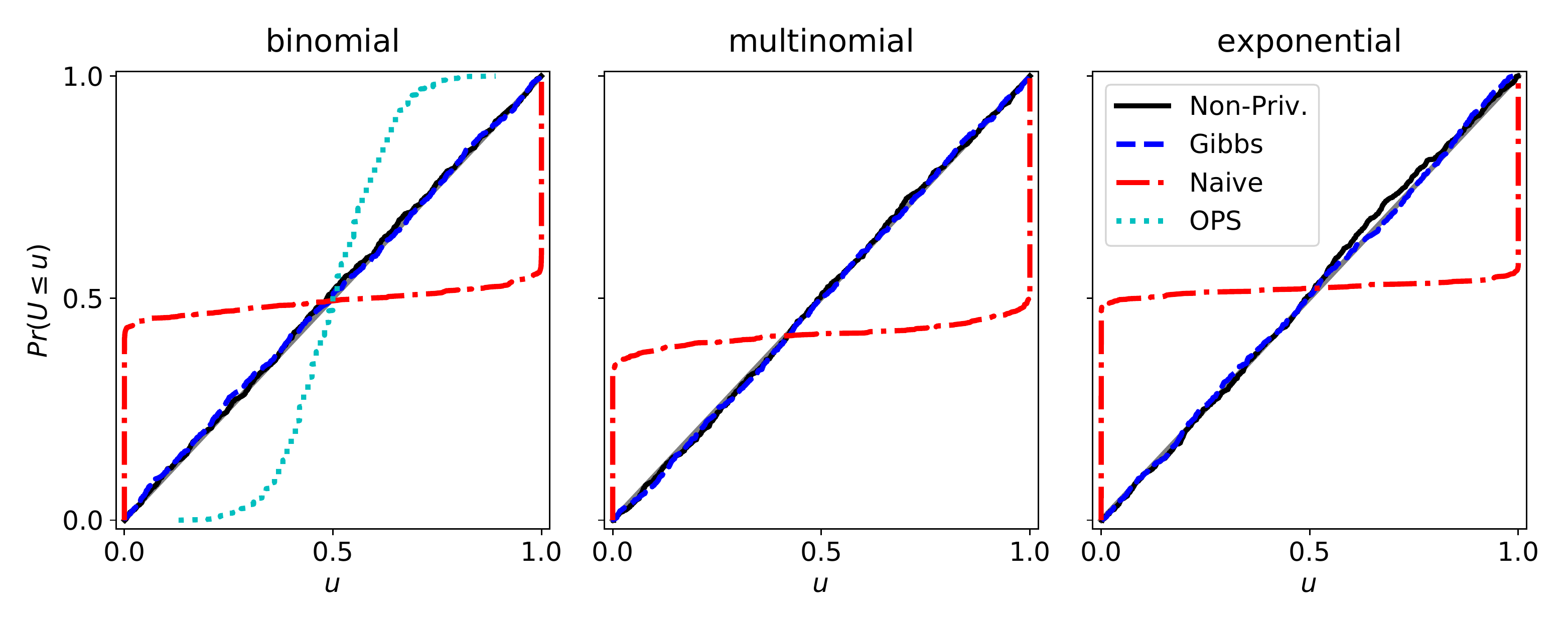}
      \subcaption{\label{fig:QQ}}
  \end{minipage}%
  \hfill
  \begin{minipage}[b]{.3\linewidth}
    \includegraphics[width=\textwidth]{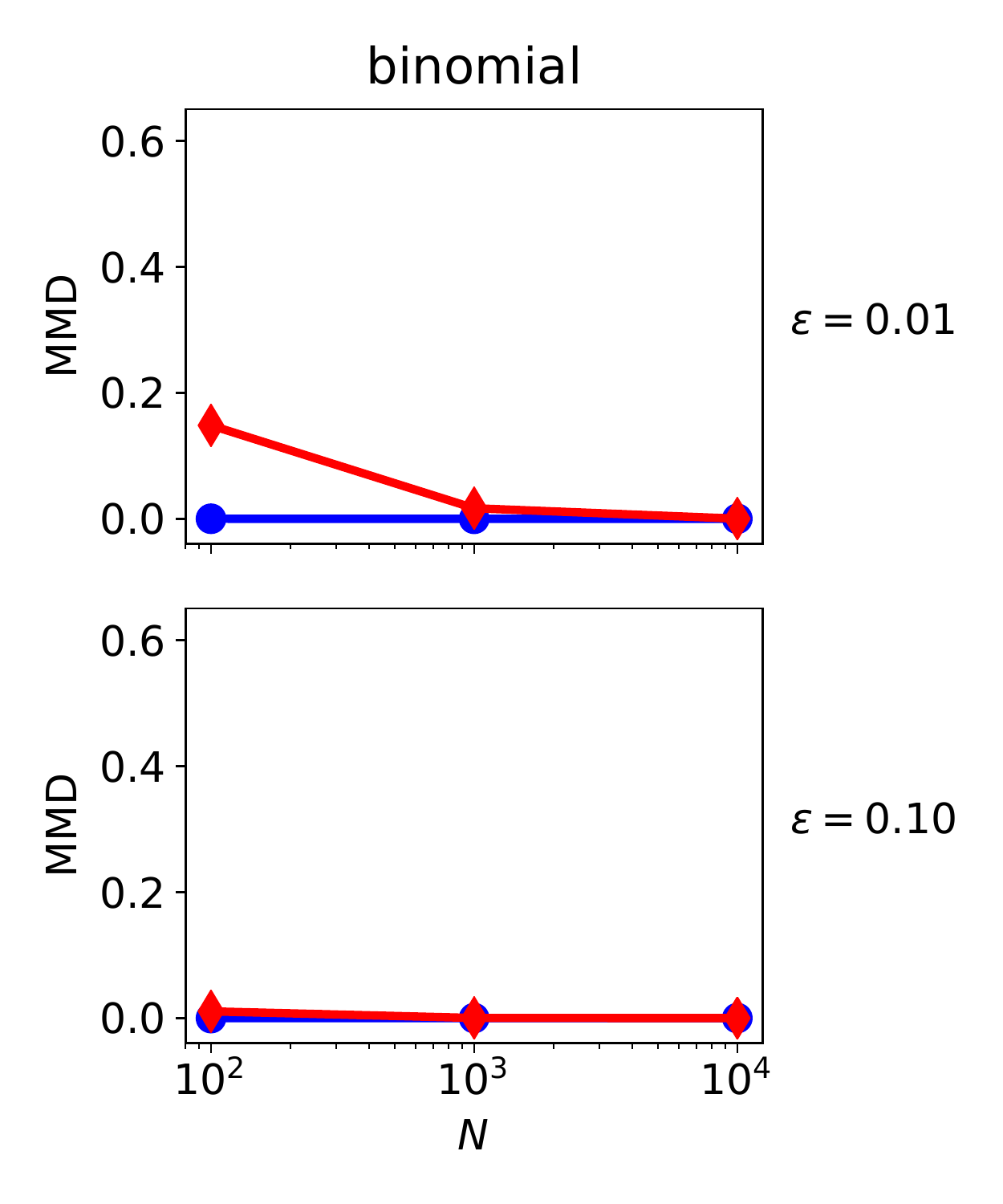}
    \vspace{-.05in}
    \includegraphics[width=\textwidth]{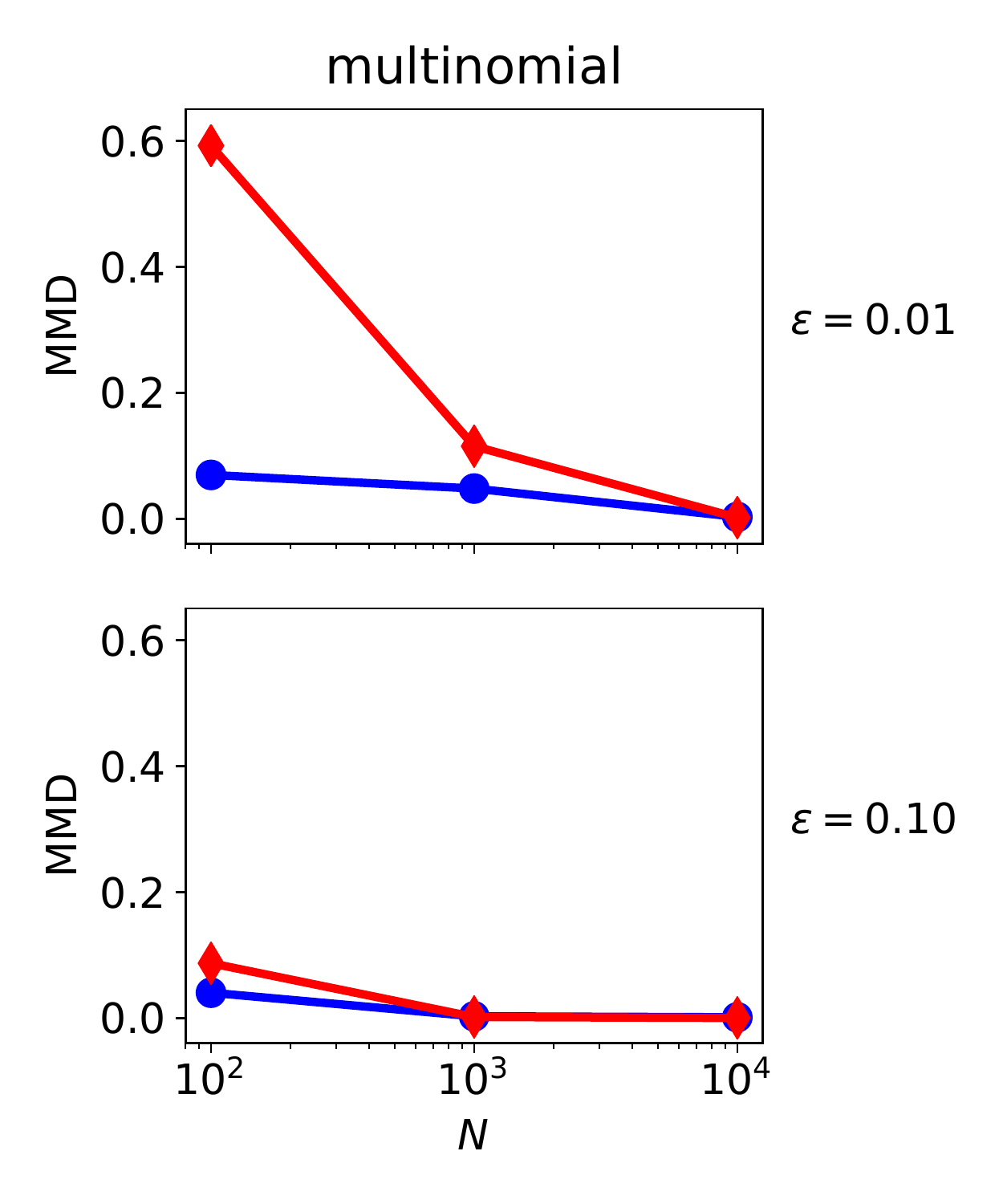}
    \subcaption{\label{fig:utility}}
  \end{minipage}

\caption{(\subref{fig:KS}) Calibration as Kolmogorov-Smirnov statistic vs. number of individuals at $\epsilon=[0.01, 0.10]$ for binomial, multinomial, and exponential models. (\subref{fig:QQ}) Empirical CDF plots at $(n=1000;\, \epsilon=0.01)$ for binomial, multinomial, and exponential models. (\subref{fig:utility}) Utility as MMD with non-private posterior vs. number of individuals at $\epsilon=[0.01, 0.10]$ for binomial and multinomial models.}
\end{figure}

Figure \ref{fig:utility} shows the MMD test statistic between each method and the non-private posterior, used as a measure of utility. Our method consistently achieves utility at least as good as the naive method for binomial and multinomial models. We omit OPS, which is never calibrated. For the exponential model (not shown) we did not obtain conclusive utility comparisons due to the lack of a naive baseline that properly handles truncation; the ``cheating'' naive method from our calibration experiments sometimes attains higher utility than our method, and sometimes lower, but this comparison is not meaningful because it receives strictly more information.

\section{Conclusion} 
\label{sec:conclusion}

We presented a Gibbs sampling approach for private posterior inference in exponential family models. Rather than trying to approximate the posterior of $p(\theta \mid x_{1:n})$, we divide our procedure into a private \emph{release mechanism} $y = \A(x_{1:n})$ and an \emph{inference algorithm} $\P$ that computes $p(\theta \mid y)$. The release mechanism is designed to facilitate inference. We develop a general-purpose Gibbs sampler that applies to any exponential family model that has bounded sufficient statistics; a truncated version applies to univariate models with unbounded sufficient statistics. The Gibbs sampler uses general properties of exponential families to approximate the distribution of the sufficient statistics, and therefore avoids the need to reason about individuals. Promising lines of future work are to develop efficient methods for multivariate exponential families with unbounded sufficient statistics, and to develop methods for conditional models based on exponential families, such as generalized linear models.


\subsubsection*{Acknowledgments}
This material is based upon work supported by the National Science Foundation under Grant Nos. 1522054 and 1617533.

\bibliography{expfam}{}
\bibliographystyle{unsrtnat}

\appendix
\newpage

\section{Properties of Exponential Families}\label{appendix}

\subsection{Form of \texorpdfstring{$\text{Conjugate-Update}(\lambda, x_{1:n})$}{Conjugate-Update}}\label{conjuate-priors}


Following \citet{diaconis1979conjugate}, the prior is

\[
p(\eta \mid \lambda) = h(\lambda) \expp{\lambda_1^\top \eta - \lambda_2 A(\eta) - B(\lambda)},
\]

where the parameters are $\lambda = [\lambda_1, \lambda_2]$ and
sufficient statistics are $[\eta, -A(\eta)]$

The posterior after observing $x_{1:n}$ is
\[
\begin{aligned}
p(\eta \mid \lambda, x_{1:n}) &= h(\lambda') \expp{\lambda_1'^\top x - \lambda_2'A(\eta) - B(\lambda')} \\
\lambda_1' &= \lambda_1 + \sum_i t(x_i) \\
\lambda_2' &= \lambda_2 + n
\end{aligned}
\]

Define above updates as
$\lambda' = \text{Conjugate-Update}(\lambda, x_{1:n})$

\subsubsection{Proof of Log-Partition Function of Truncated
Distribution used in Lemma \ref{lemma:autodiff}}\label{log-partition-function-of-truncated-distribution}

\textbf{Claim}:
\[\hat{A}(\eta) =  A(\eta) + \log \big(F(w; \eta) - F(v; \eta) \big)\]

\textbf{Proof}: \[
\begin{aligned}
\exp \big(\hat{A}(\eta)\big)
&=  \int_{v}^w h(x) \expp{\eta^T t(x)}dx \\
&=  \exp \big(A(\eta)\big) \int_{v}^w h(x) \expp{\eta^T t(x) - A(\eta)}\,dx \\
&=  \exp \big(A(\eta)\big) \big( F(w; \theta) - F(v; \theta)\big)
\end{aligned}
\]

\subsubsection{Proof of Lemma \ref{lemma:autodiff}: \texorpdfstring{Mean and Variance of $t(x)$ in
truncated
distribution}{Mean and Variance of t(x) in truncated distribution}}\label{mean-and-variance-of-tx-in-truncated-distribution}

\textbf{Claim} \[
\begin{aligned}
\E_{\hat{p}}[t(x)] &= \E_{p}[t(x)] + \frac{\del}{\del \eta^T}\log\big(F(w; \eta) - F(v; \eta)\big) \\
\Var_{\hat{p}}[t(x)] &= \Var_{p}[t(x)] + \frac{\del^2}{\del \eta \del \eta^T}\log\big(F(w; \eta) - F(v; \eta)\big) \\
\end{aligned}
\]

\textbf{Proof}: \[
\begin{aligned}
\E_{\hat{p}}[t(x)] &= \frac{\del}{\del\eta^T} \hat{A}(\eta) \\
&= \frac{\del}{\del\eta^T} \Big(A(\eta) + \log\big(F(w; \eta) - F(v; \eta) \big) \Big) \\
&= \frac{\del}{\del\eta^T} A(\eta) + \frac{\del}{\del \eta^T}\log\big(F(w; \eta) - F(v; \eta) \big)  \\
&= \E_p[t(x)] + \frac{\del}{\del \eta^T}\log\big(F(w; \eta) - F(v; \eta) \big) 
\end{aligned}
\]

The proof for $\Var_{\hat{p}}[t(x)]$ is similar.

\section{Derivation of \texorpdfstring{$\sigma^2$}{noise variance} Gibbs update} 

    We fully derive the Gibbs update for the noise variance $\sigma^2$ of the augmented model as stated in \citet{Park:2008aa}. We represent the Laplace distribution with scale $b = \Delta_s/\epsilon$ as a scale mixture of normals, i.e. a zero-mean normal with an exponential prior on the variance:

    \begin{align*}
      p(z \mid b) = \frac{1}{2b}\expp{-\frac{|z|}{b}}
      &= \int_0^\infty \underbrace{\frac{1}{\sqrt{2\pi \sigma^2}} \expp{-\frac{z^2}{2\sigma^2}}}_{p(z \mid \sigma^2)}
      \cdot \underbrace{\ell\expp{-\ell \sigma^2}}_{p(\sigma^2 \mid b)} d\sigma^2, \quad \ell = 1/2b^2 \\
    \end{align*}

    For clarity we have written the exponential rate as $\ell = 1/2b^2$. Also recall that the noise $z$ corresponds to the difference $y - s$ between the noisy and non-noisy sufficient statistics in our model. As per \citet{Park:2008aa} we can write the conditional update for $\sigma^2$ as a Wald distribution (inverse-Gaussian) with the change of variable $t = 1 / \sigma^2$:
    \begin{align*}
        p_t\left(t \mid z, \ell\right) &= \bigg|\frac{d}{dt} \frac{1}{t}\bigg| \cdot p_{\sigma^2}\left(\frac{1}{t} \mid z, \ell\right) \\
                &= \frac{1}{t^2} \cdot p_{\sigma^2}\left(\frac{1}{t} \mid z, \ell\right) \\
                &= \frac{1}{t^2} \cdot \frac{1}{\sqrt{2\pi \frac{1}{t}}} \expp{-\frac{z^2}{2\frac{1}{t}}} \cdot \ell\expp{-\frac{\ell}{t}} \\
                &\propto \frac{1}{\sqrt{t^3}} \expp{-\frac{z^2}{2}t - \frac{\ell}{t}} \label{eqn:invnorm}
    \end{align*}

    \texttt{numpy.random.Wald} is a two-parameter (mean and scale) implementation of inverse-Gaussian. Its pdf is 

    \begin{align*}
        \text{Wald}(t; \mu, \gamma) &= \frac{\gamma}{\sqrt{2\pi t^3}} \expp{- \frac{\gamma(t - \mu)^2}{2\mu^2t}} \\
                 &\propto \frac{1}{\sqrt{t^3}} \expp{- \frac{\gamma(t - \mu)^2}{2\mu^2t}} \\
                 &= \frac{1}{\sqrt{t^3}} \expp{- \frac{\gamma t^2 - 2\gamma\mu t + \gamma\mu^2}{2\mu^2t}} \\
                 &= \frac{1}{\sqrt{t^3}} \expp{- \frac{\gamma}{2\mu^2}t + \frac{\gamma}{\mu} - \frac{\gamma}{2t}} \\
                 &\propto \frac{1}{\sqrt{t^3}} \expp{- \frac{\gamma}{2\mu^2}t - \frac{\gamma}{2t}} \\
    \end{align*}

    Then matching parameters we have

    \begin{align*}
        \gamma &= 2\ell \\
               &= \frac{1}{b^2}
    \end{align*}

    and

    \begin{align*}
        \frac{\gamma}{\mu^2} &= z^2 \\
        \mu &= \sqrt{\frac{\gamma}{z^2}} = \frac{1}{bz}
    \end{align*}

    So we draw $t$ from
    \[
        p\left(t \mid z,b\right) = \text{Wald}\left(t; \, \frac{1}{b z}, \frac{1}{b^2}\right)
        \]
        and set $\sigma^2 = 1/t$.

\section{Sensitivity of Sufficient Statistics in Truncated Model}

Recall that $\hat{t}(x) = \1_{[v,w]}(x)\, t(x)$. Then

\begin{align*}
  \Delta_{\hat{s}}
  &= \max_{x, y \in \R} \|\hat{t}(x) - \hat{t}(y)\|_1 \\
  &= \max_{x,y \in \R} \sum_j |\hat{t}_j(x) - \hat{t}_j(y)| \\
  &\leq \sum_j \max_{x,y \in \R}  |\hat{t}_j(x) - \hat{t}_j(y)| \\
  &= \sum_j \max\Big\{ \max_{x \in [v,w], y \notin [v,w]} |\hat{t}_j(x) - \hat{t}_j(y)|, \max_{x, y \in [v,w]} |\hat{t}_j(x) - \hat{t}_j(y)|\Big\} \\
  &= \sum_j \max\Big\{ \max_{x \in [v,w]} |t_j(x)|, \max_{x, y \in [v,w]} |t_j(x) - t_j(y)|\Big\}
\end{align*}
    
\section{Proof of uniformity of CDF transform used by \texorpdfstring{\citet{cook2006validation}}{Cook et al.}} 
\label{sec:cook_s_test}

    \textbf{Claim}: Let $X$ be a random variable with CDF $F$. The random variable $U = F(X)$ is uniformly distributed. 

    \textbf{Proof}:
    \begin{align*}
    \Pr(U \leq u) &= \Pr(F(X) \leq u) \\ 
                &= \Pr\big(F^{-1}(F(X)) \leq F^{-1}(u) \big) \\
                &= \Pr(X \leq F^{-1}(u)) \\
                &= F\big(F^{-1}(u)\big) \\
                &= u
    \end{align*}






\section{Convergence of Gibbs Sampler} 
\label{sec:convergence_of_gibbs_sampler}

    Figure \ref{fig:convergence} shows the progress of sampled model parameters over the course of 500 iterations for both binomial and exponential models. For both models the samples quickly converge to the vicinity of the true parameter.

    \begin{figure}[ht]
      \centering
      \includegraphics[width=.8\textwidth]{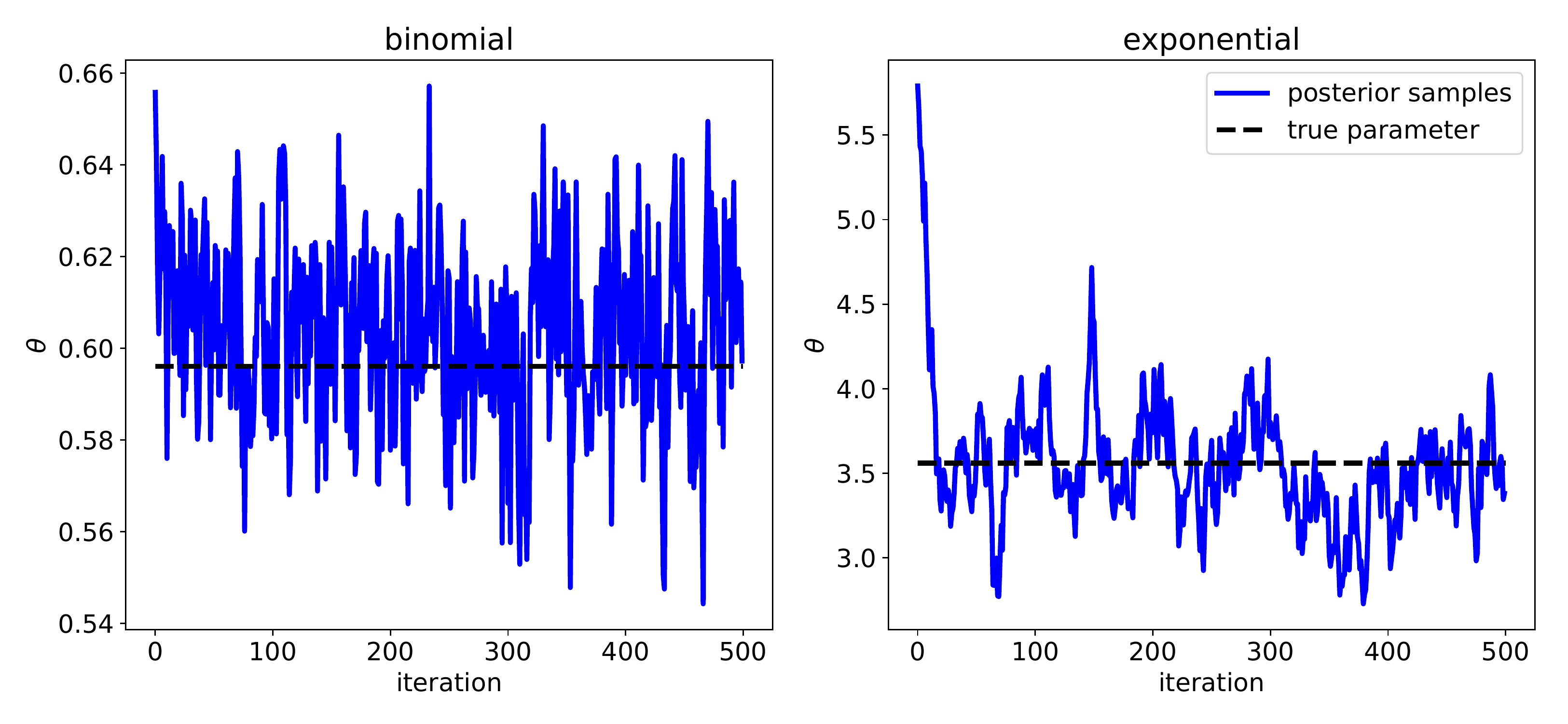}
      \caption{Progress of Gibbs sampler parameters over iterations at $(n=1000;\, \epsilon=0.1)$ for binomial and exponential models.\label{fig:convergence}}
    \end{figure}


\end{document}